\newcommand{\setBoldness}[1]{\def\fake@bold{#1}}
\newcommand{\savefootnote}[2]{\footnote{\label{#1}#2}}
\newcommand{\repeatfootnote}[1]{\textsuperscript{\ref{#1}}}
\newcommand{\fbseries}{\unskip\setBold\aftergroup\unsetBold\aftergroup\ignorespaces}
\pgfplotsset{width=7.5cm,compat=1.12}
\newcommand\norm[1]{\lVert#1\rVert}
\DeclareMathOperator*{\argmax}{arg\,max}
\algnewcommand{\Initialize}[1]{
  \State \textbf{Initialize:}
  \Statex \hspace*{\algorithmicindent}\parbox[t]{.8\linewidth}{\raggedright #1}
}
\algnewcommand{\Input}[1]{
  \State \textbf{Input:} {\raggedright #1}
}
\algnewcommand{\Output}[1]{
  \State \textbf{Output:} {\raggedright #1}
}
\begin{document}

\title{Are Transformers More Robust? Towards Exact Robustness Verification for Transformers\thanks{\includegraphics[height=0.25cm]{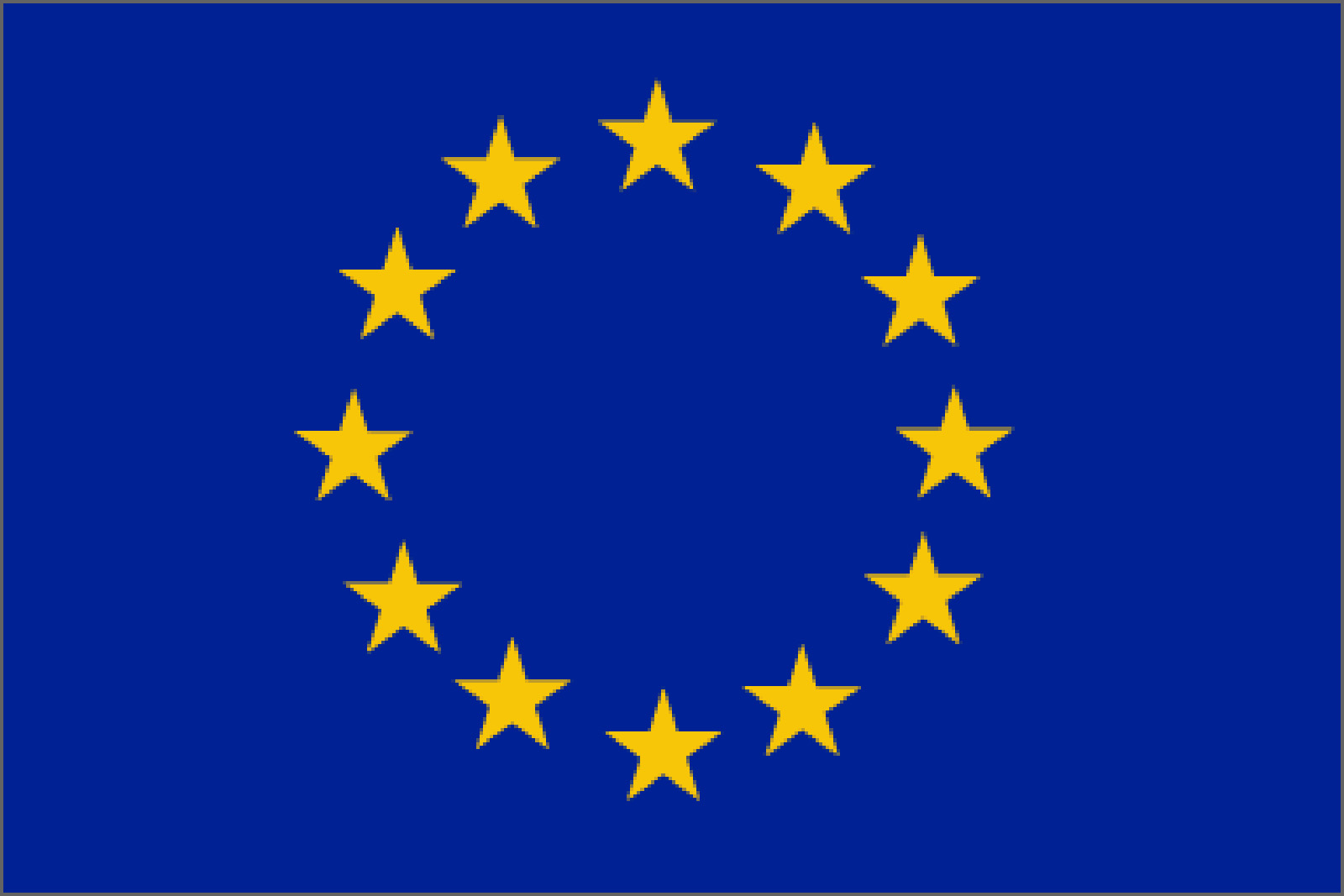} This project has received funding from the European Union’s Horizon 2020 research and innovation programme under grant agreement No 956123 - FOCETA.}}

\titlerunning{Towards Exact Robustness Verification for Transformers}

\author{Brian Hsuan-Cheng Liao\inst{1,2} \Letter
\and
Chih-Hong Cheng\inst{3}
\and
Hasan Esen\inst{1} 
\and 
Alois Knoll\inst{2}
}

\authorrunning{B. H.-C. Liao et al.}

\institute{
DENSO AUTOMOTIVE Deutschland GmbH, 85386 Eching, Germany \\\email{\{h.liao, h.esen\}@eu.denso.com}
\and
Technical University of Munich, 85748 Garching, Germany 
\\\email{knoll@in.tum.de}
\and
Fraunhofer IKS, 80686 Munich, Germany
\\\email{chih-hong.cheng@iks.fraunhofer.de}
\vspace{-3mm}
}

\maketitle

\begin{abstract}
As an emerging type of Neural Networks (NNs), Transformers are used in many domains ranging from Natural Language Processing to Autonomous Driving. In this paper, we study the robustness problem of Transformers, a key characteristic as low robustness may cause safety concerns. Specifically, we focus on Sparsemax-based Transformers and reduce the finding of their maximum robustness to a Mixed Integer Quadratically Constrained Programming (MIQCP) problem. We also design two pre-processing heuristics that can be embedded in the MIQCP encoding and substantially accelerate its solving. We then conduct experiments using the application of Land Departure Warning to compare the robustness of Sparsemax-based Transformers against that of the more conventional Multi-Layer-Perceptron (MLP) NNs. To our surprise, Transformers are not necessarily more robust, leading to profound considerations in selecting appropriate NN architectures for safety-critical domain applications.
\vspace{-2mm}
\keywords{NN verification \and Robustness \and Transformers \and Lane Departure Warning \and Autonomous Driving.}
\end{abstract}

\section{Introduction}
\label{sec:introduction}
\vspace{-2mm}
Over the past decade, Neural Networks (NNs) have been widely adopted for many applications, including automated vehicles (AVs)~\cite{bojarski2016end}. Lately, as an emerging type of NNs, Transformers~\cite{vaswani2017attention} are often found to be the most effective models, compared to the more conventional Multi-Layer Perceptrons (MLPs) or their convolutional and recurrent variants~\cite{dosovitskiy2021image}, thereby gradually replacing them in these applications. For instance, Tesla and Cruise use Transformers in their perception units~\cite{cruise2021underthehood,tesla2022aiday}. However, most of the studies and discussions focus on evaluating NNs' accuracy. Parallel research has shown that NNs often lack robustness against input changes such as adversarial attacks or domain shifts, hence hindering the overall dependability of the NN-based applications~\cite{goodfellow2015explaining,hu2022human,su2019one}. 

The above background naturally prompts a question of whether Transformers are more robust than MLPs, given the often better accuracy and wide applications. To answer this question, we study the maximum robustness of the NNs against local input perturbations commonly modeled by $l_p$-distances, i.e., exact robustness verification (where $p$ can be $1,2, ..., \infty$). Ultimately, the goal is to hold a direct comparison of the robustness of the two kinds of NNs, Transformers and MLPs, and gain insights from the results. 

In the literature, research efforts have been made to enable exact robustness verification for MLPs, particularly ones with feed-forward layers and ReLU activation function~\cite{cheng2017maximum,ehlers2017formal,lomuscio2017approach,tjeng2019evaluating}. The main approach is to employ an optimization framework, encode the NNs' architecture into the constraints, and calculate the exact robustness within some admissible input perturbation region~\cite{cheng2017maximum,tjeng2019evaluating}. However, there is still a gap for Transformers' exact robustness verification due to the more complex operations in this kind of NNs, namely the dot product between variables and the activation function in the Multi-Head Self-Attention (MSA) block~\cite{vaswani2017attention}. The existing (small volume of) works handle these operations with approximations during verification yet at the expense of verification precision~\cite{bonaert2021fast,shi2020robustness} (more details can be found in Section~\ref{sec:related_work}).

Our work attempts to close the gap towards exact robustness verification for Transformers but provides merely an interim solution. To elaborate, having formulated a Mixed Integer Programming (MIP)-based optimization problem, we focus on the Transformers that use Sparsemax (instead of Softmax) for MSA activation. This allows for precisely encoding the NN into the MIP, or more particularly, Mixed Integer Quadratically Constrained Programming (MIQCP) due to the remaining quadratic terms. We provide a comparison study to show that the Sparsemax-based Transformers perform similarly to their Softmax-based counterparts. Then, to faster solve the MIQCP, we devise two pre-processing heuristics that lead to a total speedup of an order of magnitude. Notably, these heuristics are not restricted to our work but can be applied to related studies. 

We perform the experiments using a Lane Departure Warning (LDW) application, which is widely adopted in AVs. Such an LDW application can be used for human driving assistance or run-time monitoring on separate automated driving functions. Essentially, an LDW application is a time-series classification and regression task. The embedded model, usually an NN, has to predict the direction of and the time to a potential lane departure, given a sequence of past driving information such as the ego vehicle velocity and estimated time to collision against adjacent vehicles. Our methodology and experimental results, though limited (similar to the exact robustness verification works focusing on ReLU-based MLPs), demonstrate that Sparsemax-based Transformers tend to be less robust than similar-sized MLPs despite generally higher accuracy. Resonating the government publications~\cite{ec2021aiact} and industrial guidelines~\cite{poretschkin2023ai}, our findings suggest that conducting thorough studies and providing rigorous guarantees on metrics beyond accuracy is crucial before deploying an NN-based application. In summary, our contributions include the following:
\begin{itemize}
    \item To implement exact robustness verification for Sparsemax-based Transformers;
    \item To propose two accelerating heuristics for related robustness verification studies;
    \item To benchmark ATN and MLP accuracy and robustness with an industrial application (i.e., LDW).
\end{itemize}

The rest of the paper is organized in the following way. Section~\ref{sec:related_work} browses the relevant literature emphasizing verification methods for general NNs and Transformers; Section~\ref{sec:preliminaries} introduces the branch of Transformers concerned in this paper. Section~\ref{sec:methodology} details the problem formulation and our heuristics for robustness verification, whose effectiveness and efficiency are demonstrated and discussed in Section~\ref{sec:experiments}. Lastly, Section~\ref{sec:conclusion} concludes with a few final remarks.
\section{Related Works}
\label{sec:related_work}
This section overviews related works, focusing on robustness verification for ReLU-based MLPs (i.e., piece-wise linear feed-forward NNs) and Transformers.

\subsection{Robustness Verification for Neural Networks}
\label{subsec:robustness_verification_nns}
Following the common categorization~\cite{tjeng2019evaluating}, we introduce two main branches of verification methods: complete and incomplete. To illustrate the difference, we assume an \textit{adversarial polytope} to be the exact set of NN outputs resulting from the norm-bounded perturbation region. To assert the robustness of the NN within the perturbation region, complete methods handle the adversarial polytope directly, attaining an adversarial example or a robustness certificate for each query when given sufficient processing time. These methods usually apply Mixed Integer Programming (MIP)~\cite{cheng2017maximum,lomuscio2017approach,tjeng2019evaluating} or Satisfiability Modulo Theory (SMT)~\cite{ehlers2017formal,huang2017safety,katz2017reluplex}, which in turn utilizes Linear Programming (LP) or Satisfiability (SAT) solvers with accelerating techniques such as interval analysis~\cite{cheng2017maximum,ehlers2017formal,tjeng2019evaluating} or region partitioning~\cite{everett2021robustness} in a Branch-and-Bound (BnB) fashion~\cite{wang2021betacrown}. By contrast, incomplete methods reason upon an outer approximation of the adversarial polytope. Such reasoning typically results in faster verification time, yet possibly some robust queries being evaluated non-robust due to the over-approximation. Common methods in this branch include duality~\cite{wong2018provable}, abstract interpretation~\cite{gehr2018ai2} and Semi-Definite Programming (SDP)~\cite{wang2021betacrown}. For more details, interested readers are referred to the survey paper~\cite{huang2020survey}.

\subsection{Robustness Verification for Transformers}
\label{subsec:robustness_verification_transformers}
Transformers are typically more challenging to verify because they contain more complex operations than other NNs, such as MLPs. We are only aware of two lines of existing works~\cite{bonaert2021fast,shi2020robustness}, both conducted with sentiment classification in Natural Language Processing. In~\cite{shi2020robustness}, the authors calculate linear intervals for all operations in a Transformer to find the lower bound of the difference between the Softmax values of the ground-truth class and the most-probable-other-than-ground-truth class. If the computed lower bound is larger than zero (within an admissible perturbation region), the model is guaranteed robust since the prediction remains unchanged. As mentioned, the effectiveness of such approaches depends heavily on how well the lower bound approximates the actual difference of the Softmax values. Holding a similar strategy, the most recent work~\cite{bonaert2021fast} applies abstract interpretation and suggests techniques, such as noise symbol reduction and Softmax summation constraint, to achieve better speed and precision during verification.

Our work differs from these verification frameworks in two ways. First, we find Transfomers' maximum robustness (i.e., minimum adversarial perturbation) through precise MIP encoding and fasten the procedure with novel tactics, allowing us to compare the robustness with common MLPs. Second, we conduct experiments with LDW, an industrial application not used before in robustness verification studies.
\section{Preliminaries}
\label{sec:preliminaries}
This section provides a brief description of the Transformer under verification. For a more elaborate illustration of general Transformers, readers are referred to~\cite{dosovitskiy2021image,vaswani2017attention}.

\begin{figure}[t!]
	\centering
	\includegraphics[width=0.6\linewidth]{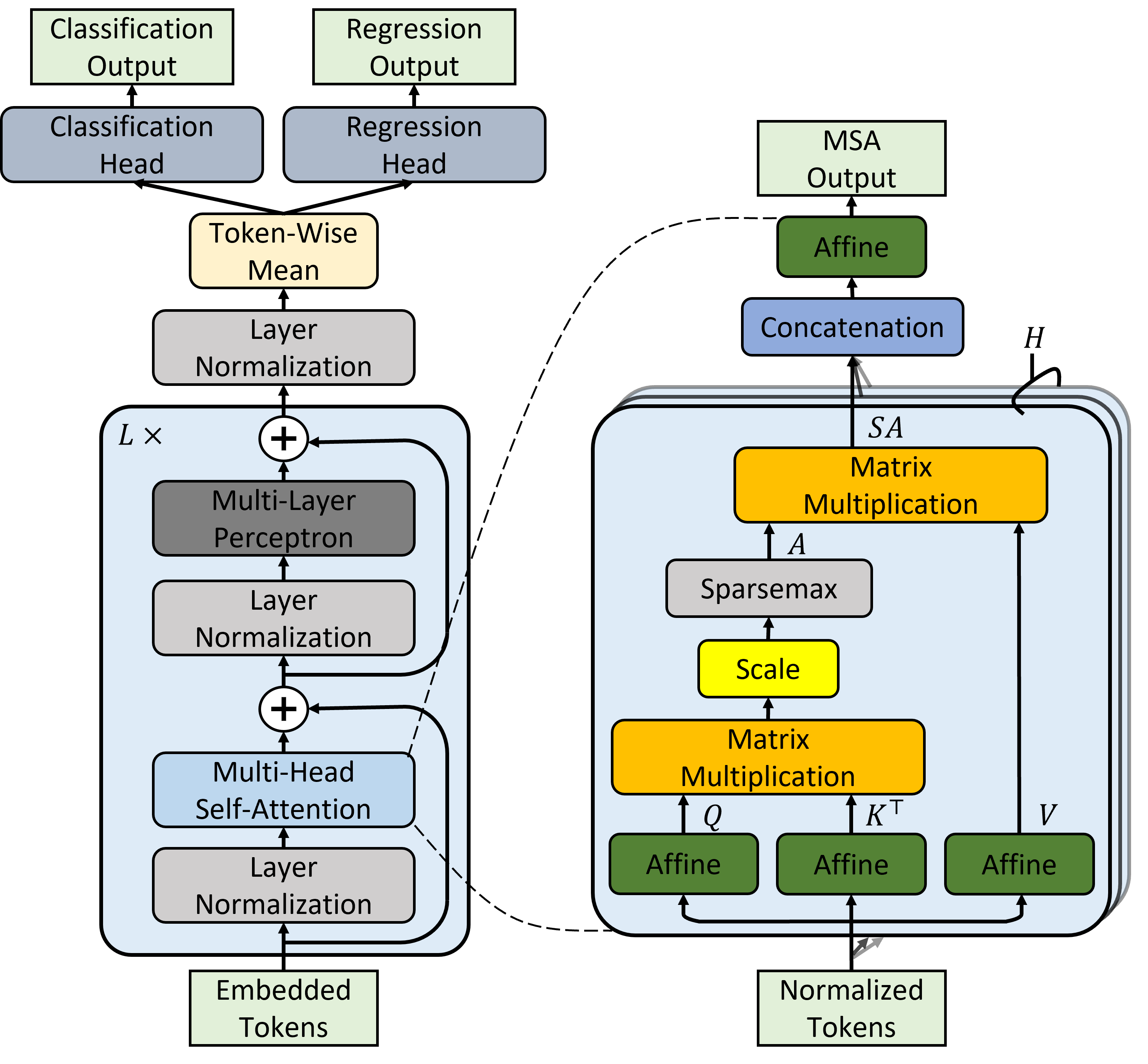}
	\caption{The Sparsemax-based Transformer under exact robustness verification~\cite{dosovitskiy2021image,vaswani2017attention}.}
	\vspace{-3mm}
	\label{fig:atn}
\end{figure}

As shown in Fig.~\ref{fig:atn}, a Transformer typically processes an array of embedded tokens with alternating blocks of MSA and MLP, which are both preceded by Layer Normalization (LN) and followed by residual connections\footnote{Placing LN before MSA and MLP is found to give better network performance than placing it after residual addition~\cite{xiong2020layer}.}. Then, after another LN and token-wise mean extraction, the network is appended with suitable affine heads for downstream predictions such as classification (CLS) and regression (REG). Mathematically, given an input $\textbf{x} \in \mathbb{R}^{N \times D}$, in which $N$ is the number of tokens and $D$ the dimension of features, we write:
\begin{align}
    \textbf{z}^0 &= \textbf{x} 
    \label{eq:z_0} \\
    \widehat{\textbf{z}^{\ell}} &= \textsf{MSA}(\textsf{LN}(\textbf{z}^{\ell-1})) + \textbf{z}^{\ell-1},
    \label{eq:msa} \\
    \textbf{z}^\ell &= \textsf{MLP}(\textsf{LN}(\widehat{\textbf{z}^{\ell}})) + \widehat{\textbf{z}^{\ell}},  
    \label{eq:mlp} \\
    f^\textsf{CLS}(\textbf{x}) & = \textsf{CLS}(\widetilde{\textbf{z}^{L}}) \in \mathbb{R}^C, 
    \label{eq:f_cls} \\
    f^\textsf{REG}(\textbf{x}) &= \textsf{REG}(\widetilde{\textbf{z}^{L}}) \in \mathbb{R}
    \label{eq:f_reg},
\end{align}
in which $\widetilde{\textbf{z}^{L}} \in \mathbb{R}^D$ is the token-wise mean of $\textsf{LN}(\textbf{z}^{L}) \in \mathbb{R}^{N \times D}$ , $C$ is the number of predefined classes and $\ell=1, \dots, L$ is the layer index. We define the functions in the following.

\begin{figure}[b!]
    \centering
    \includegraphics[width=0.35\linewidth]{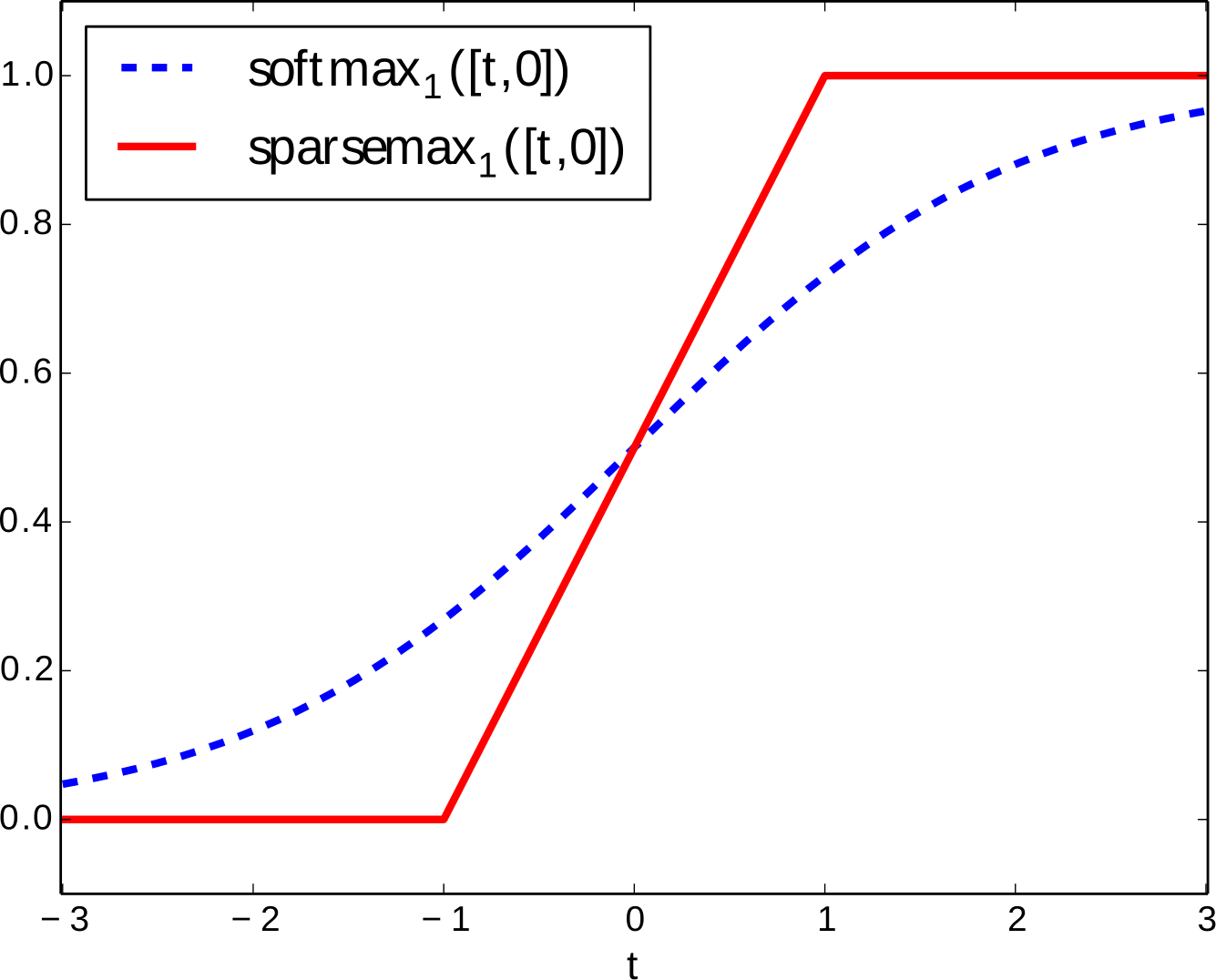}
    \caption{Softmax vs. Sparsemax, given an input vector $\textbf{u}=[t, 0] \in \mathbb{R}^2$ (adapted from~\cite{martins2016softmax}).}
    \label{fig:sparsemax}
\end{figure}

\begin{algorithm}[b!]
\caption{Calculate Sparsemax activation~\cite{martins2016softmax}}
\label{alg:sparsemax}
\begin{algorithmic}[1]
   \Input{\textbf{u} $\in \mathbb{R}^D$}
   \State Sort \textbf{u} into $\hat{\textbf{u}}$, where $\hat{\textbf{u}}_1 \ge \dots \ge \hat{\textbf{u}}_D$ \label{alg:sparsemax:sort}
   \State Find $s(\hat{\textbf{u}}) := \max \left\{ k \in [1, D] \; | \; 1+k\hat{\textbf{u}}_k > \sum_{j=1}^{k} \hat{\textbf{u}}_j \right\}$ \label{alg:sparsemax:find}
   \State Define $\tau(\hat{\textbf{u}}) = \frac{\left(\sum_{j=1}^{s(\hat{\textbf{u}})} \hat{\textbf{u}}_j \right) - 1}{s(\hat{\textbf{u}})}$ \label{alg:sparsemax:tau}
   \Output{$\textbf{p} \in \mathbb{R}^D$, where $\textbf{p}_i = \max (\textbf{u}_i - \tau(\hat{\textbf{u}}),\, 0) $}
\end{algorithmic}
\end{algorithm}

As introduced, we consider Sparsemax-based MSA with the definition:
\begin{align}
    [\textbf{Q}_h, \textbf{K}_h, \textbf{V}_h] &= \textbf{z} \textbf{W}^\textsf{QKV}_h, 
    \label{eq:qkv} \\
    \textbf{A}_h &= \textsf{\textsf{Sparsemax}}\left(\textbf{Q}_h\textbf{K}^\top_h / \sqrt{D_H}\right), 
    \label{eq:attn_matrix} \\
    \textsf{SA}_h(\textbf{z}) &= \textbf{A}_h\textbf{V}_h , 
    \label{eq:sa} \\
    \textsf{MSA}(\textbf{z}) &= [\textsf{SA}_1(\textbf{z}), \dots , \textsf{SA}_H(\textbf{z})] \, \textbf{W}^{\textsf{MSA}}, 
    \label{eq:msa_detailed}
\end{align}
where $\textbf{z} \in \mathbb{R}^{N \times D}$ is a general input matrix, $\textbf{Q}_h, \textbf{K}_h, \textbf{V}_h \in \mathbb{R}^{D_H}$ are the query, key and value matrices for the $h$-th self-attention head, $H$ is the number of self-attention heads, $D_H$ is the dimension of each head, $\textbf{W}^\textsf{QKV}_h \in \mathbb{R}^{D \times (D_H \cdot 3)}$ and $\textbf{W}^{\textsf{MSA}} \in \mathbb{R}^{(D_H \cdot H) \times D}$ are the trainable weights, and $h=1, \dots, H$ is the head index. Essentially, Sparsemax projects an input vector $\textbf{u} \in \mathbb{R}^D$ to an output vector $\textbf{p} \in \mathbb{R}^D$, where $\textbf{p}_1 + \dots + \textbf{p}_D=1$ and $\textbf{p}_i \ge 0$ for $i=1, \dots, D$ (i.e., a probability simplex)\savefootnote{fn:vector}{We write here in vector form (c.f.~\eqref{eq:z_0}-\eqref{eq:mlp}) as the operations are applied vector-wise essentially.}. Technically, it can serve as a piece-wise linear approximation to Softmax~\cite{martins2016softmax}. Algorithm~\ref{alg:sparsemax} and Fig.~\ref{fig:sparsemax} provide the calculation steps for a closed-form solution and the visualization of a 2D input-output relation. 

To proceed with the Transformer architecture, MLP is a two-layer NN with ReLU activation, written as $\textsf{MLP}(\textbf{v}) = \textsf{\textsf{ReLU}}(\textbf{v} \textbf{W}^\textsf{MLP}_1  + \textbf{b}^\textsf{MLP}_1) \textbf{W}^\textsf{MLP}_2  + \textbf{b}^\textsf{MLP}_2 \in \mathbb{R}^{D}$, where $\textbf{v} \in \mathbb{R}^D$ is the input vector and $\textbf{W}^\textsf{MLP}_1 \in \mathbb{R}^{D \times D_{\textsf{MLP}}}$, $\textbf{b}^\textsf{MLP}_1 \in \mathbb{R}^{D_{\textsf{MLP}}}$, $\textbf{W}^\textsf{MLP}_2 \in \mathbb{R}^{D_{\textsf{MLP}} \times D}$, $\textbf{b}^\textsf{MLP}_2 \in \mathbb{R}^{D}$ the trainable parameters, and~$+$ the addition with broadcasting rules in common NN implementation libraries\repeatfootnote{fn:vector}. Similarly, we have the two affine heads, $\textsf{CLS}(\textbf{v}) = \textbf{v} \textbf{W}^\textsf{CLS} + \textbf{b}^\textsf{CLS} \in \mathbb{R}^C$ and $\textsf{REG}(\textbf{v}) = \textbf{v} \cdot \textbf{W}^\textsf{REG} + b^\textsf{REG} \in \mathbb{R}$, where $\textbf{v} = \widetilde{\textbf{z}^{L}} \in \mathbb{R}^D$ and $\textbf{W}^\textsf{CLS} \in \mathbb{R}^{D \times C}$, $\textbf{b}^\textsf{CLS} \in \mathbb{R}^{C}$, $\textbf{W}^\textsf{REG} \in \mathbb{R}^D$ and $b^\textsf{REG} \in \mathbb{R}$. Finally, we note that a linearized variant of LN is used, i.e., $\textsf{LN}_i(\textbf{v}) = \textbf{w}_i \times (\textbf{v}_i - \mu_{\textbf{v}}) + \textbf{b}_i$, where $\textbf{v} \in \mathbb{R}^D$ is the input vector, $\mu_{\textbf{v}} \in \mathbb{R}$ its mean, $\textsf{LN}_i(\textbf{v})$ the $i$-th element for $i=1,...,D$, and $\textbf{w} \in \mathbb{R}^D$ and $\textbf{b} \in \mathbb{R}^D$ the trainable parameters\repeatfootnote{fn:vector}. Such modification avoids the division over relatively small input variance $\sigma_\textbf{v}$ in the quadratic LN definition, thereby allowing the variables to be bounded more tightly~\cite{shi2020robustness}.

\section{Methodology}
\label{sec:methodology}
Having defined the Sparsemax-based Transformer, we now give the robustness property for verification. Subsequently, we highlight our MIQCP encoding steps and two heuristics that shall accelerate the solving of the encoded MIQCP.

\subsection{Problem Formulation}
\label{subsec:formulation}
We formalize the problem of robustness verification as follows: Let $f(\cdot): \, \mathbb{R}^M \rightarrow \mathbb{R}^N$ denote the NN under verification, $\textbf{x} \in \mathbb{R}^M$ the original data point on which the NN is being verified, and $\textbf{x}' \in \mathbb{R}^M$ a perturbed input which tries to deceive the NN, we write:
\begin{align}
& \min_{\textbf{x}'} \mathcal{D}_p(\textbf{x}', \textbf{x}) \label{eq:obj}  \\
\text{subject to} \; & \textbf{x}' \in \mathcal{B}_p(\textbf{x}) \label{eq:ball}, \\
& \argmax_i(f^\textsf{CLS}_i(\textbf{x})) = \textsf{gt}^{\textsf{CLS}}(\textbf{x}), \; \\
& \argmax_i(f^\textsf{CLS}_i(\textbf{x}')) \neq \textsf{gt}^{\textsf{CLS}}(\textbf{x}) \label{eq:misprediction},
\end{align}
where $\mathcal{D}_p(\cdot,\cdot)$ is the $l_p$-distance with commonly used $p \in \{1, 2, \infty \}$ and  $\mathcal{B}_p(x)= \{ \textbf{x}' \, \big{|} \, \norm{\textbf{x}' - \textbf{x}}_p \leq \epsilon \}$ the $l_p$-norm ball of radius $\epsilon$ around $\textbf{x}$, $\textsf{gt}^{\textsf{CLS}}(x) \in \{1, \dots, C\}$ the ground-truth class label and $f^\textsf{CLS}_i$ the $i$-th element of the classification head output. Conceptually, the optimizer's main task is to find within an admissible perturbation region a perturbed data point closest to the original one and fulfills the misprediction constraints. Due to the space limit, we write only the classification model here, but regression cases can be derived similarly, as suggested by~\cite{bonaert2021fast}.

\subsection{MIQCP Encoding}
\label{subsec:encoding}
In the following, we highlight how to encode Sparsemax (i.e., Algorithm~\ref{alg:sparsemax}, Line~\ref{alg:sparsemax:sort}-~\ref{alg:sparsemax:tau}) into the optimization problem (which will essentially be a MIQCP problem). Encoding methods for other terms in the Transformer (e.g., affine transformation and ReLU) can be seen in~\cite{cheng2017maximum,lomuscio2017approach,tjeng2019evaluating}.

For sorting (Algorithm~\ref{alg:sparsemax}, Line~\ref{alg:sparsemax:sort}), we introduce a binary integer permutation matrix $\textbf{P} \in \{{0,1}\}^{D \times D}$ and encode the following constraints:
\begin{align}
    \sum_{i=1}^D \textbf{P}_{ij} = 1, & \quad \text{for} \; j = 1, \dots, D; \\
    \sum_{j=1}^D \textbf{P}_{ij} = 1, & \quad \text{for} \; i = 1, \dots, D; \\
    \hat{\textbf{u}} = \textbf{P}\textbf{u}, & \\
    \hat{\textbf{u}}_i \geq \hat{\textbf{u}}_{i+1}, & \quad \text{for} \; i = 1, \dots, D-1,
\end{align}
where $\textbf{u} \in \mathbb{R}^D$ is the (perturbed) input vector at Sparsemax and $\hat{\textbf{u}}$ the sorted output. 

For calculating the support (Algorithm~\ref{alg:sparsemax}, Line~\ref{alg:sparsemax:find}), we first define a vector $\rho \in \mathbb{R}^D$, where $\rho_k = 1 + k\hat{\textbf{u}}_k - \sum_{j=1}^{k} \hat{\textbf{u}}_j$ ($k = 1, \dots, D$), and then introduce another binary integer vector $\zeta \in \{0, 1 \}^D$ such that:
\begin{align}
    \zeta_k = 
    \begin{cases}
    1, \quad \text{if} \; \rho_k > 0; \\
    0, \quad \text{otherwise}.
    \end{cases}
    \label{eq:zeta}
\end{align}
It can be seen that finding the support is equivalent to summing up the vector as $s(\hat{\textbf{u}}) = \sum_{k=1}^D \zeta_k$. However, we actually need to implement the step function in~\eqref{eq:zeta} in the MIQCP. For this, we introduce the Big-$M$ method~\cite{grossmann2002review} with large positive constants $M_k^{+/-}$ and a small positive constant $\eta$ (e.g., $10^{-6}$) such that:
\begin{align}
    \rho_k & \leq M_k^{+} \times \zeta_k, \label{eq:M_pos} \\
    -\rho_k + \eta & \leq M_k^{-} \times (1 - \zeta_k). \label{eq:M_neg}
\end{align}
We now provide two lemmas to explain how $M_k^{+/-}$ are set. 

\begin{lemma}
For all $k=1, \dots, D$, the smallest value for the Big-$M$ encoding in \eqref{eq:M_pos} is $\prescript{opt}{}M_k^{+} = 1$.
\label{lem:big_m_pos}
\end{lemma}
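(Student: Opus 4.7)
The plan is to establish that the tightest valid $M_k^+$ equals the worst-case (largest) value that $\rho_k$ can attain over all admissible sorted vectors $\hat{\textbf{u}}$, and then to show that this worst case is exactly $1$.

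First, I would argue soundness: in the Big-$M$ pair \eqref{eq:M_pos}--\eqref{eq:M_neg}, constraint \eqref{eq:M_pos} must be inactive (equivalently, non-binding) whenever $\zeta_k = 1$, so $M_k^+$ must be at least $\rho_k$ for every feasible $\hat{\textbf{u}}$. Conversely, any $M_k^+ \ge \sup \rho_k$ is a valid choice. Hence the smallest valid $M_k^+$ is exactly $\sup \rho_k$, where the supremum is taken over vectors $\hat{\textbf{u}}$ that arise as sorted outputs inside the encoded network (in particular, satisfying $\hat{\textbf{u}}_1 \ge \hat{\textbf{u}}_2 \ge \dots \ge \hat{\textbf{u}}_D$).

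Next, I would derive the upper bound. Substituting the definition $\rho_k = 1 + k\hat{\textbf{u}}_k - \sum_{j=1}^{k} \hat{\textbf{u}}_j$ and using the sortedness $\hat{\textbf{u}}_j \ge \hat{\textbf{u}}_k$ for every $j \le k$, I get
\begin{equation*}
\sum_{j=1}^{k} \hat{\textbf{u}}_j \;\ge\; k\hat{\textbf{u}}_k,
\end{equation*}
so the bracketed term $k\hat{\textbf{u}}_k - \sum_{j=1}^{k}\hat{\textbf{u}}_j$ is non-positive, yielding $\rho_k \le 1$. This gives $\sup \rho_k \le 1$, so $M_k^+ = 1$ is already sufficient for the Big-$M$ relaxation.

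Finally, I would show tightness by exhibiting a feasible configuration that attains $\rho_k = 1$. Taking $\hat{\textbf{u}}_1 = \hat{\textbf{u}}_2 = \cdots = \hat{\textbf{u}}_k$ (for instance, a constant input vector $\textbf{u}$) satisfies the ordering constraints and makes the bracketed term vanish, so $\rho_k = 1$ is realized. Thus no value strictly smaller than $1$ can serve as $M_k^+$, completing the argument. The whole proof is essentially a direct consequence of the ordering of $\hat{\textbf{u}}$, so there is no significant obstacle; the only subtlety worth being explicit about is the correspondence between ``smallest valid $M_k^+$'' and ``supremum of $\rho_k$,'' which I would spell out up front to make the bound both necessary and sufficient.
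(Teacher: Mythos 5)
Your proof is correct and follows essentially the same route as the paper: both arguments reduce to the observation that sortedness of $\hat{\textbf{u}}$ forces $k\hat{\textbf{u}}_k - \sum_{j=1}^{k}\hat{\textbf{u}}_j \le 0$, hence $\rho_k \le 1$ (the paper phrases this as $1 = \rho_1 \ge \rho_2 \ge \dots \ge \rho_D$). You are somewhat more careful than the paper in spelling out why the smallest valid $M_k^+$ equals $\sup\rho_k$ and in exhibiting the constant vector that attains $\rho_k = 1$, but this is elaboration rather than a different argument.
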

\begin{proof}
    We first rewrite $\rho_k = 1 + k \hat{\textbf{u}}_k - \sum_{j=1}^{k} \hat{\textbf{u}}_j = 1 + \sum_{j=1}^{k} (\hat{\textbf{u}}_k - \hat{\textbf{u}}_j)$. Now, with the sorting result (i.e., $\hat{\textbf{u}}_1 \geq \hat{\textbf{u}}_2 \geq \ldots \geq \hat{\textbf{u}}_D$), it follows that $1 = \rho_1 \ge \rho_2 \ge \dots \ge \rho_D$, hence the lemma.
    \hfill$\blacksquare$
\end{proof}

\begin{lemma}
For all $k=1, \dots, D$, let the input of Sparsemax be bounded as $\textbf{u}_k \in [\underline{\textbf{u}_k}, \overline{\textbf{u}_k}]$ and $\eta=10^{-6}$. We first define $\lambda \in \mathbb{R}^D: \lambda_k =  1 + (k-1)(\underline{\textbf{u}} - \overline{\textbf{u}})$, where $\underline{\textbf{u}} = \min (\underline{\textbf{u}_1}, \dots, \underline{\textbf{u}_D})$ and $\overline{\textbf{u}} = \max (\overline{\textbf{u}_1}, \dots, \overline{\textbf{u}_D})$. Then, the smallest value for the Big-$M$ encoding in \eqref{eq:M_neg} is $\prescript{opt}{}M_k^{-} = |\lambda_k|+\eta$, if $\lambda_k \le 0$; otherwise, \eqref{eq:M_neg} needs not be implemented.
\label{lem:big_m_neg}
\end{lemma}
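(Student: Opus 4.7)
The plan is to analyze when constraint \eqref{eq:M_neg} is actually active and then bound $\rho_k$ from below using the same telescoping decomposition used in Lemma~\ref{lem:big_m_pos}. The starting point is the semantics of the Big-$M$ encoding: when $\zeta_k=1$ the right-hand side of \eqref{eq:M_neg} vanishes, which forces $\rho_k \geq \eta$ and is independent of $M_k^{-}$; when $\zeta_k=0$ the constraint reduces to $M_k^{-} \geq -\rho_k + \eta$. Hence the smallest admissible $M_k^{-}$ is exactly the worst-case value of $-\rho_k + \eta$ over the feasible region, that is, $M_k^{-} = -\min \rho_k + \eta$, provided that the minimum is non-positive (otherwise $\zeta_k=0$ is infeasible and \eqref{eq:M_neg} can be discarded).

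Next I would reuse the identity $\rho_k = 1 + \sum_{j=1}^{k} (\hat{\textbf{u}}_k - \hat{\textbf{u}}_j)$ derived in the proof of Lemma~\ref{lem:big_m_pos}. The $j=k$ summand is zero, leaving $k-1$ non-positive terms. For each such term, the sorted vector $\hat{\textbf{u}}$ is a permutation of $\textbf{u}$, so every entry lies in $[\underline{\textbf{u}}, \overline{\textbf{u}}]$ by the given coordinate-wise bounds combined with the definitions of $\underline{\textbf{u}}$ and $\overline{\textbf{u}}$. Therefore $\hat{\textbf{u}}_k - \hat{\textbf{u}}_j \geq \underline{\textbf{u}} - \overline{\textbf{u}}$ for every $j \in \{1,\dots,k-1\}$, yielding
\begin{equation*}
\rho_k \;\geq\; 1 + (k-1)(\underline{\textbf{u}} - \overline{\textbf{u}}) \;=\; \lambda_k.
\end{equation*}

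From here the case analysis is immediate. If $\lambda_k > 0$, then $\rho_k > 0$ on the entire feasible region, so $\zeta_k$ is forced to $1$ by \eqref{eq:M_pos} and \eqref{eq:M_neg} is redundant, matching the lemma's claim that it need not be implemented. If $\lambda_k \leq 0$, then the branch $\zeta_k = 0$ is not automatically excluded, and the tightest Big-$M$ compatible with the bound just derived is $M_k^{-} = -\lambda_k + \eta = |\lambda_k| + \eta$, which is exactly the stated value.

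The main obstacle is justifying the ``smallest'' qualifier: one must argue not merely that $|\lambda_k|+\eta$ is a valid Big-$M$, but that nothing smaller works given only the box constraints on $\textbf{u}$. I would handle this by exhibiting (or arguing the existence of) a feasible input attaining the lower bound $\lambda_k$, namely by placing one coordinate at its lower bound $\underline{\textbf{u}}$ in position $k$ of $\hat{\textbf{u}}$ and coordinates reaching $\overline{\textbf{u}}$ in positions $1,\dots,k-1$, which respects the sorting constraint because $\underline{\textbf{u}} \leq \overline{\textbf{u}}$. Any strictly smaller $M_k^{-}$ would then violate \eqref{eq:M_neg} at this witness, showing optimality.
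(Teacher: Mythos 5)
Your proof is correct and follows essentially the same route as the paper: the telescoping identity $\rho_k = 1 + \sum_{j=1}^{k}(\hat{\textbf{u}}_k - \hat{\textbf{u}}_j)$, the vector-wise bound $\hat{\textbf{u}}_k - \hat{\textbf{u}}_j \ge \underline{\textbf{u}} - \overline{\textbf{u}}$ giving $\rho_k \ge \lambda_k$, and the same case split on the sign of $\lambda_k$. You are in fact somewhat more thorough than the paper, which omits both the explicit reduction of the minimal $M_k^-$ to $-\min\rho_k + \eta$ and any tightness witness (note only that your witness is feasible with respect to the vector-wise relaxation $\hat{\textbf{u}}_k \in [\underline{\textbf{u}}, \overline{\textbf{u}}]$ rather than the original per-coordinate boxes, which is exactly the relaxation the paper says is all one can rely on after sorting).
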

\begin{proof}
    After sorting on $\textbf{u}$, we can only rely on vector-wise bounds for estimating $\hat{\textbf{u}}$, i.e., $\hat{\textbf{u}}_k \in  [ \underline{\textbf{u}}, \overline{\textbf{u}} ]$. Considering the result of sorting (i.e., $\hat{\textbf{u}}_1 \geq \hat{\textbf{u}}_2 \geq \ldots \geq \hat{\textbf{u}}_D$), we can then derive $\rho_k = 1 + k \hat{\textbf{u}}_k - \sum_{j=1}^{k} \hat{\textbf{u}}_j = 1 + \sum_{j=1}^{k} (\hat{\textbf{u}}_k - \hat{\textbf{u}}_j) \ge 1 + (k-1)(\underline{\textbf{u}} - \overline{\textbf{u}}) = \lambda_k$. Now, there are two cases: If $\lambda_k \le 0$, then the smallest value for \eqref{eq:M_neg} is $\prescript{opt}{}M_k^-=|\lambda_k|+\eta$. Otherwise, $\rho_k \ge \lambda_k > 0$ and \eqref{eq:M_neg} is not needed.
    \hfill$\blacksquare$
\end{proof}
Lastly, Algorithm~\ref{alg:sparsemax} Line~\ref{alg:sparsemax:tau} can be encoded with a linear constraint for the summation term and a quadratic constraint for the division. As such, we arrive at a plain MIQCP encoding for quantifying the Sparsemax-based Transformer's robustness.

\subsection{Acceleration Heuristics}
\label{subsec:heuristics}
As related works indicate, one usually needs several acceleration heuristics to solve an encoded MIP problem efficiently~\cite{cheng2017maximum,lomuscio2017approach,tjeng2019evaluating}. We present our proposals in this section.

\subsubsection{Interval Analysis}
Interval analysis has been widely studied and proven effective in aiding MIP solving~\cite{cheng2017maximum,tjeng2019evaluating}. The central idea is that with tight interval bounds propagated across the network, non-linear functions such as ReLU or max can be constrained to certain behaviors, thus avoiding undetermined variables. We extend this line of thought by finding a novel bounding technique over Sparsemax\footnote{This technique also applies to Softmax.}. Denoting the activation function as $\sigma$, (perturbed) input vector as $\textbf{z} \in \mathbb{R}^{D}$, output vector as $\textbf{a}$, upper and lower bounds as upper and lower bars, we have:
\begin{align}
	\begin{split}
		\textbf{a}_i \in [\underline{\textbf{a}_i}, \; \overline{\textbf{a}_i} ] = & [ \sigma_i (\overline{\textbf{z}_1},  \dots, \underline{\textbf{z}_i}, \dots, \overline{\textbf{z}_D}), \\
		& \; \sigma_i (\underline{\textbf{z}_1},  \dots, \overline{\textbf{z}_i}, \dots, \underline{\textbf{z}_D}) ],
	\end{split}
\end{align}
where $i=1,\dots,D$. Essentially, this formula stems from the observation that the lower (upper) bound of the output of an element can be calculated by applying the activation to a vector consisting of this element's input lower (upper) bound and other elements' input upper (lower) bounds. In our evaluation, we generally see much tighter intervals (usually one-tenth of the simple probability interval $[0, 1]$) with this technique. 

\subsubsection{Progressive Verification with Norm-Space Region Partitioning}
Region partitioning shares a similar goal to interval analysis, attempting to tighten variable bounds and generate faster solutions. Related works have focused on how to do partitioning and prioritizing~\cite{everett2021robustness}. Our work differs slightly by exploiting the formulation of the optimization problem and the observation that adversarial examples usually appear close to the clean inputs. Hence, we propose a progressive verification procedure based on norm-space region partitioning. To illustrate, given $\epsilon$, the radius of the $l_p$-norm ball, and a partition step $0 < \epsilon_{step} \leq \epsilon$, we first create a sub-region with a lower bound $\epsilon_{min} = 0$ and an upper bound $\epsilon_{max} = \epsilon_{min} + \epsilon_{step}$ and then run verification on this sub-region. If the verifier cannot find a solution in the current sub-region, we move on to the next one by setting $\epsilon_{min} \mathrel{+}= \epsilon_{step}$ and $\epsilon_{max} \mathrel{+}= \epsilon_{step}$ until the entire admissible region is covered. As such, we generally obtain a tighter interval for the perturbed input variable (taking $p=1$, for instance):
\begin{align}
& 0 \leq \epsilon_{min} \leq \norm{\textbf{x}' - \textbf{x}}_1 = \norm{\delta}_1 \leq \epsilon_{max} \leq \epsilon \label{eq:region_partitioning} \\ 
\Rightarrow & \quad \epsilon_{min} \leq \sum_{d=1}^{D} | \delta_d | \leq \epsilon_{max} \\
\Rightarrow & \quad 0 \leq | \delta_d | \leq \epsilon_{max} \\
\Rightarrow & \quad -\epsilon_{max} \leq \delta_d  \leq \epsilon_{max} \\
\Rightarrow & \quad \textbf{x}'_d \in \Big[ \underline{\textbf{x}'_d},\ \;  \overline{\textbf{x}'_d} \Big] = \big[ \textbf{x}_d-\epsilon_{max}, \; \textbf{x}_d+\epsilon_{max} \big],
\end{align}
where $d=1,\dots,D$. The tightness of the variable intervals depends on the value of $\epsilon_{max}$. If $\epsilon_{max}$ grows towards $\epsilon$, the variable intervals shall fall back to the original regions. Nonetheless, considering adversarial examples often appear closely around the clean inputs, we conjecture that early sub-regions would already contain one, offering a high possibility for a quick solution. 

\begin{algorithm}[b!]
\caption{Verify an NN model with norm-space region partitioning}
\label{alg:verify}
\begin{algorithmic}[1]
\Procedure{VerifyModel}{$f, \textbf{x}, \textsf{gt}(\textbf{x}), \epsilon, \epsilon_{step}, \epsilon_{min}, \epsilon_{max}, t_{limit}$}
    \State Encode $f, \textbf{x}, \textsf{gt}(\textbf{x})$ into an MIQCP $\mathcal{M}$ with $\epsilon_{min}$ and $\epsilon_{max}$
    \State Solve $\mathcal{M}$ with Gurobi under $t_{limit}$ and obtain an interim tuple $(s, n, obj, \textbf{x}', gap, t_{exec})$ \label{alg:verify:interim}
    \If{$s$ is \textsf{optimal}}
        \State \textbf{return} (\textsf{OPT}, $obj$, $\textbf{x}'$)
    \ElsIf{$s$ is \textsf{timeout} \textbf{and} $n > 0$}
        \State \textbf{return} (\textsf{SAT}, $obj$, $\textbf{x}'$, $\alpha$)
    \ElsIf{$s$ is \textsf{timeout} \textbf{and} $n = 0$} \label{alg:verify:undetermined_1}
        \State \textbf{return} (\textsf{UNDTM}, $\epsilon_{min}$) \label{alg:verify:undetermined_2}
    \ElsIf{$s$ is \textsf{infeasible}}
        \State $t_{limit}$ -= $t_{exec}$ 
        \State $\epsilon_{min}$ += $\epsilon_{step}$
        \State $\epsilon_{max}$ = $\min$($\epsilon_{max}$+$\epsilon_{step}$, $\epsilon$)
        \If{$\epsilon_{min} > \epsilon$}
            \State \textbf{return} (\textsf{UNSAT})
        \Else
            \State \textbf{return} \textsc{VerifyModel}($f, \textbf{x}, \textsf{gt}(\textbf{x}), \epsilon, \epsilon_{step}, \epsilon_{min}, \epsilon_{max}, t_{limit})$
        \EndIf
    \EndIf
\EndProcedure
\Input{$f$, $\textbf{x}$, $\textsf{gt}(\textbf{x})$, $\epsilon$, $\epsilon_{step}$, $t_{limit}$}
\State \textbf{Initialize: } $\epsilon_{min} \gets 0$, $\epsilon_{max} \gets \epsilon_{step}$
\State $verification\_result$ = $\textsc{VerifyModel}(f, \textbf{x}, \textsf{gt}(\textbf{x}), \epsilon, \epsilon_{step}, \epsilon_{min}, \epsilon_{max}, t_{limit})$
\Output{$verification\_result$}
\end{algorithmic}
\end{algorithm}

We summarize the progressive procedure in Algorithm~\ref{alg:verify}. Additionally, to prevent the verifier from executing unboundedly, we place a pre-defined time limit $t_{limit}$ on the algorithm. We employ Gurobi 9.5~\cite{gurobi2021gurobi} to solve the progressively encoded MIQCPs $\mathcal{M}$. As shown in Line~\ref{alg:verify:interim}, for each call on a sub-region, the solver returns an interim tuple including the optimization status $s$ (being \textsf{optimal}, \textsf{timeout} or \textsf{infeasible}), solution count $n$, objective $obj$, counterexample $\textbf{x}'$, solving gap $\alpha$ and execution time $t_{exec}$. Based on the interim tuple, we check if the verification is done optimally (\textsf{OPT}), timed out with a sub-optimal solution (\textsf{SAT}), timed out with no solution (\textsf{UNDTM}), unsatisfied (\textsf{UNSAT}), or to be continued with the next sub-region. Each of these conditions will be appended with corresponding data from the interim tuple to form the final verification result. Notably, apart from fastening the verification process, another advantage of such a progressive procedure is that one can still attain a good lower bound of the optimal objective if a timeout occurs (i.e., Algorithm~\ref{alg:verify}, Line~\ref{alg:verify:undetermined_2}).
\section{Experimental Results and Discussions}
\label{sec:experiments}
This section presents the experimental results of the proposed method and techniques. We first introduce the Lane Departure Warning (LDW) application, then show an accuracy benchmark, and finally present an ablation study on the heuristics and a robustness comparison.
\vspace{-1mm}

\subsection{Lane Departure Warning}
\label{subsec:lane_departure_warning}
The LDW application performs a time-series joint classification and regression task~\cite{mahajan2020prediction}. For training and evaluating the NNs, we utilize the High-D dataset\footnote{The utilization of the High-D dataset in this paper is for knowledge dissemination and scientific publication and is not for commercial use.}~\cite{krajewski2018highd}, a drone-recorded bird's-eye-view highway driving dataset. For each vehicle in the recordings, we process raw data into trajectory information, including the past ten steps of time-wise features $\textbf{x} \in \mathbb{R}^{10\times14}$ (spanning across one second), the direction $\textsf{gt}^\textsf{CLS} \in \{0, 1, 2\}$ of the lane departure (where $0$ is no departure, $1$ a left departure, and $2$ a right departure), and the time $\textsf{gt}^\textsf{REG} \in [0, 1]$ to the lane departure. More specifically, the 14 features include left and right lane existence (2), ego distance to lane center (1), longitudinal and latitudinal velocities and accelerations (4), and time-to-collision to surrounding vehicles excluding the following one (7). Based on these features, the NN's task is to predict a potential lane departure direction (i.e., classification) and timing (i.e., regression) up to one second in the future.
\vspace{-1mm}

\subsection{Accuracy Benchmark}
\label{subsec:accuracy_benchmark}
As a side experiment, we evaluate the accuracy of different NNs for LDW. For classification, the model is accurate if the predicted direction matches the ground-truth direction; for regression, it is accurate if the predicted timing falls within 0.1 second from the ground-truth timing. For the Transformers, we set $H=2, D_H=4, D_\textsf{MLP}=8$ (as per Section~\ref{sec:preliminaries}). For the MLPs, we replace the MSA within the Transformer with another MLP of hidden-layer dimension $D_{\textsf{MLP}}=16$, resulting in similar numbers of network parameters. We implement the NNs with PyTorch~\cite{paszke2019pytorch}, train them using the Adam optimizer and a fixed learning rate of 0.003 for 50 epochs, and report the best results in Table~\ref{tab:training_results}.

As observed, even for a relatively small application (considering the variable dimensions), the Transformers generally perform better than MLPs regarding accuracy. Additionally, NNs with piece-wise linear activation functions (i.e., Sparsemax and ReLU) are on a par with, if not stronger than, the ones with Softmax or Tanh. This result corresponds well with the original Sparsemax paper~\cite{martins2016softmax} and justifies the piece-wise linear activation functions in exchange for better verifiability (as also suggested in~\cite{shi2020robustness}). 

\setBoldness{0.6}
\begin{table}[t!]
\caption{Accuracy of various NNs in LDW: Transformers tend to be more accurate than MLPs. $L$ denotes the number of layers in the NN as defined in Section~\ref{sec:preliminaries}. For LN, 2 denotes the quadratic variant, 1 the linear variant, and 0 no layer normalization. Within each column, we mark the best model overall in bold and the best model by its type (i.e., Transformer or MLP) in italic fonts.}
\renewcommand{\arraystretch}{0.95}
\setlength\tabcolsep{3pt}
\centering
\footnotesize
\resizebox{0.6\columnwidth}{!}{
\begin{tabular}{|c|c|c|cc|cc|}
    \hline
    \multirow{2}{*}{Network} &
    \multirow{2}{*}{Activation} &
    \multirow{2}{*}{LN} &
    \multicolumn{2}{c|}{$L=1$} & 
    \multicolumn{2}{c|}{$L=2$} \\
    & & & $\textsf{CLS}$ & $\textsf{REG}$ & $\textsf{CLS}$ & $\textsf{REG}$ \\
    \hline
    
    \multirow{6}{*}{Transformer} & 
    \multirow{3}{*}{Sparsemax} & 2 & 98.01\% & 87.24\% & 98.50\% & 91.49\% \\
    & & 1 & 98.31\% & 88.46\% & 98.52\% & 91.39\% \\
    & & 0 & \textit{{\fbseries 98.34\%}} & \textit{{\fbseries 90.38\%}} & 98.54\% & \textit{{\fbseries 92.31\%}} \\
    \cline{2-7}
    & \multirow{3}{*}{Softmax} & 2 & 98.01\% & 87.11\% & 98.55\% & 91.88\% \\
    & & 1 & 97.95\% & 87.56\% & 98.61\% & 91.52\% \\
    & & 0 & 98.21\% & 88.03\% & \textit{{\fbseries 98.64\%}} & 91.21\% \\
    \cline{1-7} 
    
    \multirow{6}{*}{MLP} & 
    \multirow{3}{*}{ReLU} & 2 & 97.62\% & 83.81\% & 97.81\% & 84.70\% \\
    & & 1 & 97.46\% & 83.55\% & 97.89\% & 85.04\% \\
    & & 0 & 97.68\% & \textit{84.27\%} & 97.92\% & 84.81\% \\
    \cline{2-7}
    & \multirow{3}{*}{Tanh} & 2 & \textit{97.80\%} & 83.20\% & \textit{97.99\%} & 84.63\% \\
    & & 1 & 97.32\% & 82.93\% & 97.69\% & 84.81\% \\
    & & 0 & 97.23\% & 83.02\% & 97.85\% & \textit{85.14\%} \\
    \hline
\end{tabular}
}
\vspace{-1mm}
\label{tab:training_results}
\end{table}

\begin{table*}[h]
\centering
\renewcommand{\arraystretch}{0.95}
\setlength\tabcolsep{3pt}
\caption{Ablation study on the proposed acceleration heuristics, including interval analysis without and with Sparsemax bounding (IA and IA-$\sigma$) and norm-space region partitioning with different epsilon steps (RP-\textit{0.001}, RP-\textit{0.005} and RP-\textit{0.01}): Our two heuristics can give a total speedup of up to one order of magnitude. We set $\epsilon=0.05$ in all experiments. We mark the best-performing numbers of each sub-group in italic fonts if they are better than the control and further mark the best-performing numbers across all heuristics in bold fonts. For the three listed samples, we specify optimally solved cases as \textsf{OPT} and augment undetermined cases (\textsf{UNDTM}) with a lower bound of the minimum adversarial distortion (better if higher) and satisfied cases (\textsf{SAT}) with the MIQCP solution gap (better if lower).}
\label{tab:ablation}
\resizebox{\columnwidth}{!}{
    \begin{tabular}{rrrrrrrrrr}
    \toprule
    \multicolumn{1}{r}{\multirow{2}{*}{\begin{tabular}[c]{@{}c@{}}Techniques\end{tabular}}}
    & \multicolumn{3}{c}{Time elpased (s)} 
    & \multicolumn{3}{c}{Nodes explored}
    & \multicolumn{3}{c}{Random samples} \\
    \cmidrule(lr){2-4} \cmidrule(lr){5-7} \cmidrule(lr){8-10}
    & \multicolumn{1}{c}{Mean} & \multicolumn{1}{c}{Best} & \multicolumn{1}{c}{Worst} & \multicolumn{1}{c}{Mean} & \multicolumn{1}{c}{Best} & \multicolumn{1}{c}{Worst} & \multicolumn{1}{c}{No. 1} & \multicolumn{1}{c}{No. 2} & \multicolumn{1}{c}{No. 3} \\
    \midrule
    \vspace{2mm}
    \textit{Control}
    & 2367.14 & 89.63 & 3602.50 & 9326275 & 118329 & 16395255 & (\textsf{UNDTM}, 0.0) & (\textsf{SAT}, 1.0) & \textsf{OPT} \\ 
    IA
    & 1703.65 & 330.26 & 3605.29 & 5370138 & 911639 & 13103193 & (\textsf{UNDTM}, 0.0) & \textsf{OPT} & \textsf{OPT} \\
    \vspace{2mm}
    IA-$\sigma$
    & \textit{950.37} & 127.33 & \textit{2936.72} & \textit{3061462} & 247916 & \textit{10298216} & \textsf{OPT} & \textsf{OPT} & \textsf{OPT}  \\
    RP-\textit{0.001}
    & 755.66 & \textit{7.08} & 3609.83 & 849968 & \textit{11517} & 4058680 & (\textsf{UNDTM}, 0.005) & \textsf{OPT} & \textsf{OPT} \\ 
    RP-\textit{0.005} 
    & \textit{278.20} & 15.41 & \textit{1276.86} & \textit{{\fbseries 276310}} & 13734 & \textit{{\fbseries 1255818}} & \textsf{OPT} & \textsf{OPT} & \textsf{OPT} \\ 
    \vspace{2mm}
    RP-\textit{0.01} 
    & 793.76 & 51.39 & 1556.72 & 1825979 & 70243 & 3429947 & \textsf{OPT} & \textsf{OPT} & \textsf{OPT} \\
    IA-$\sigma$+RP-\textit{0.001} 
    & 852.67 & \textit{{\fbseries 5.36}} & 3608.60 & 1689582 & 14557 & 6573066 & (\textsf{UNDTM}, 0.004) & \textsf{OPT} & \textsf{OPT} \\ 
    IA-$\sigma$+RP-\textit{0.005} 
    & 296.90 & 7.74 & 1413.32 & 505555 & \textit{{\fbseries 7156}} & 2413044 & \textsf{OPT} & \textsf{OPT} & \textsf{OPT} \\ 
    IA-$\sigma$+RP-\textit{0.01} 
    & \textit{{\fbseries 222.68}} & 59.56 & \textit{{\fbseries 644.10}} & \textit{494973} & 99642 & \textit{1479260} & \textsf{OPT} & \textsf{OPT} & \textsf{OPT} \\ 
    \bottomrule
    \end{tabular}
}
\end{table*}

\subsection{Ablation Study}
\label{subsec:ablation}
We now conduct an ablation study on the acceleration heuristics described in Section~\ref{subsec:heuristics}, using the Sparsemax-based Transformer with $L=1$ and linearized LN. During our verification, we only allow the final token of the input variable $\textbf{x}$ to be perturbed, resulting in an encoded MIQCP with roughly 4000 linear constraints, 700 quadratic constraints, 400 general constraints (e.g., $\max$ or absolute operations) and 2500 binary variables. We test the heuristics with five random samples from the curated dataset and summarize the results in Table~\ref{tab:ablation}. All experiments are run with an Intel i9-10980XE CPU @ 3.0GHz using 18 threads and 20 GB of RAM.

It is observed that the novel activation bounding technique is effective. Likewise, progressive verification with norm-space region partitioning further reduces verification time. However, the best epsilon step might vary among test cases as its magnitude does not necessarily correlate to the best-case verification speed. A reason behind this might be the excessive number of MIP instances created by smaller epsilon steps. Notably, combining interval analysis and progressive verification delivers a speedup of approximately an order of magnitude.
\vspace{-2mm}

\subsection{NN Robustness Comparisons}
\label{subsec:robustness_comparison}
For robustness comparisons, we first verify and compare the robustness of the aforementioned Sparsemax-based Transformer and the similar-sized ReLU-based MLP. They respectively achieve 98.31\% and 97.46\% in accuracy for classification in Section~\ref{subsec:accuracy_benchmark}\footnote{Verifying the MLP follows similar steps in Section~\ref{sec:methodology} except that the encoding is relatively simple and can be solved by Mixed Integer Linear Programming (MILP).}. We set $\epsilon=0.03$ and $p=1$ for the admissible $\ell_p$-ball\footnote{The admissible perturbation region can be derived from input feature values analytically for better physical interpretability. For example, we can set $\epsilon$ as the normalized value of ego car lateral acceleration, considering it a decisive feature for LDW. Perturbations in this context can stem from sensor noises or hardware faults. The binary features are not perturbed.} and enable IA-$\sigma$+RP-\textit{0.01} from the previous section during verification. We collect results from 60 random data points (on which the Transformer and MLP predict correctly before perturbation) and plot the robustness comparison diagram in Fig.~\ref{fig:performance}. Overall, excluding 2 points where they draw, the Transformer is more robust than the MLP on 26 points yet less robust on 32. The result shows that the Transformer is not necessarily more robust despite the higher accuracy.

\begin{figure}[h]
\centering
\resizebox{0.5\columnwidth}{!}{
    \begin{tikzpicture}
        \begin{axis}[
            xlabel=Transformer,
            ylabel=MLP,
            xmin=0,
            ymin=0,
            xmax=0.035,
            ymax=0.035,
            xtick={0, 0.01, 0.02,  0.03},
            xticklabels={$0$, $0.01$, $0.02$, $\geq 0.03$},
            ytick={0.01, 0.02, 0.03},
            yticklabels={$0.01$, $0.02$, $\geq 0.03$},
            scaled ticks=false, 
            tick label style={/pgf/number format/fixed},
            legend style={at={(1.1,0)}, anchor=south west}
            ]
            \addplot[only marks, mark=o, mark size=5pt] 
            table[x=ANN,y=MLP,col sep=comma] {files/performance_exact.csv};
            \addplot[only marks, mark=x, mark size=5pt] 
            table[x=ANN,y=MLP,col sep=comma] {files/performance_lower.csv};
            \addplot+[no markers, dashed]  
            coordinates {(0,0) (0.035, 0.035)};
            \legend{exact, lower}
        \end{axis}
    \end{tikzpicture}
}
\caption{Robustness of the Transformer and MLP on 60 random data points (better if larger). The dashed line highlights where they perform equally. There are 2 points on the dashed line, 26 in the lower triangle and 32 in the upper one. Since verifying the Transformer still takes much time, we report the lower bounds of the robustness values for the data points requiring more than one hour to verify. This means the points marked by ``lower" can be further pushed to the right if the verifier is given more time. Nonetheless, this does not affect the overall observation as we ensure all such cases are points where the Transformer performs more robustly already.}
\label{fig:performance}
\vspace{-2mm}
\end{figure}
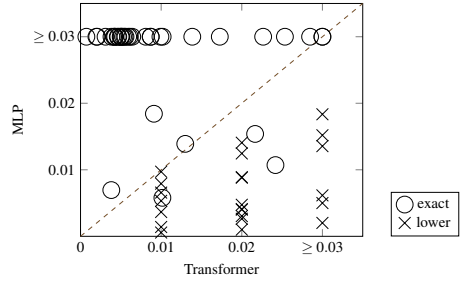

We further train and verify the robustness of five Transformers and five MLPs, each on 20 data points, giving 100 samples for each NN type. For faster verification, we set $\epsilon=0.01$ as the threshold for being robust and surprisingly find that the MLPs are verified robust on all data points. In contrast, the Transformers are verified robust on 69 data points and have a mean robustness value of $0.0041$ (excluding the robust ones). Our finding is a counterexample to several testing results on vision tasks, which suggest that Transformers are generally more robust~\cite{bhojanapalli2021understanding,shao2021adversarial}. Accordingly, it is believed that NNs may perform differently in diverse domain tasks, and it is vital to rigorously evaluate both accuracy and robustness before deploying NN-based applications.
\vspace{-1mm}

\section{Conclusion}
\label{sec:conclusion}
\vspace{-1mm}
This paper works towards exact robustness verification for ATNs. Specifically, we focus on Sparsemax-based ATNs, encode them into a MIQCP problem, and propose accelerating heuristics for solving the problem faster. When applied, our proposals fasten the verification process roughly one order of magnitude. We conduct experiments with a Lane Departure Warning application and find that ATNs are less robust than MLPs in our settings.

This initial study opens some interesting directions for further exploration. First, as we consider only Sparsemax-based ATNs, we are exploring further techniques to improve the bounds on Softmax that may facilitate its exact robustness verification. Second, we evaluate only small-scale networks (approximately 1600 neurons) on one dataset (with inputs of $14 \times 10$ dimensions). Whether our observations remain true for larger-scale networks and more datasets is yet to be explored. Third, our problem formulation only examines point-wise robustness verification for NNs. Such analyses can be combined with systematic sampling and testing methods to give formal and statistical guarantees on safety-critical applications. Lastly, our verification requires ground truths and works only in design time. How to utilize the studied techniques in a run-time setting remains an open question.
\vspace{-1mm}

\bibliographystyle{splncs04}
\bibliography{ref}

\end{document}